\DeclareMathAlphabet{\mathpzc}{OT1}{pzc}{m}{it}
\mathchardef\dash="2D
\DeclareMathOperator*{\E}{\mathbb{E}}
\DeclareMathOperator*{\argmin}{\arg\min}
\newcommand{\st}{s\dash t}
\newtheorem{theorem}{Theorem}
\newtheorem{claim}{Claim}
\theoremstyle{definition}
\newtheorem{definition}{Definition}
\newenvironment{claimproof}
 {\proof}
{\endproof}
\newcommand{\calE}{\ensuremath{\mathcal{E}}\xspace}
\newcommand{\calG}{\ensuremath{G}\xspace}
\newcommand{\calR}{\ensuremath{R}\xspace}
\newcommand{\calX}{\ensuremath{X}\xspace}
\newcommand{\calO}{\ensuremath{O}\xspace}
\newcommand{\calP}{\ensuremath{\mathcal{P}}\xspace}
\newcommand{\calW}{W}
\renewcommand{\P}{\calP}
\newcommand{\R}{\mathbb{R}}
\newcommand{\cost}{\mathrm{cost}}
\newcommand{\alg}{\textsc{Alg}\xspace}
\newcommand{\Lazy}{\textsc{LazySP}\xspace}
\renewcommand{\O}{\mathcal{O}}
\newcommand{\Lazies}{\mathpzc{LazySP}}
\newcommand{\PSPACE}{{\small \ensuremath{\mathsf{PSPACE}}\xspace}}
\newcommand\blfootnote[1]{%
  \begingroup
  \renewcommand\thefootnote{}\footnote{#1}%
  \addtocounter{footnote}{-1}%
  \endgroup
}
\begin{document}

\title{	The Provable Virtue of Laziness in Motion Planning
}


\author[1]{ Nika Haghtalab}
\author[1]{ Simon Mackenzie}
\author[1]{ Ariel D. Procaccia}
\author[2]{ Oren Salzman}
\author[3]{ Siddhartha S. Srinivasa}

\affil[1]{\small Computer Science Department, Carnegie Mellon University}
\affil[2]{\small Robotics Institute, Carnegie Mellon University}
\affil[3]{\small School of Computer Science \& Engineering, University of Washington}

\maketitle
\thispagestyle{empty}
\pagestyle{empty}


\begin{abstract}
The \emph{Lazy Shortest Path (LazySP)} class consists of motion-planning algorithms that only evaluate edges along shortest paths between the source and target. These algorithms were designed to minimize the number of edge evaluations in settings where edge evaluation dominates the running time of the algorithm; but how close to optimal are LazySP algorithms in terms of this objective? Our main result is an analytical upper bound, in a probabilistic model, on the number of edge evaluations required by LazySP algorithms; a matching lower bound shows that these algorithms are asymptotically optimal in the worst case.\blfootnote{
\llap{\textsuperscript{*}}
This work was partially supported by
a Microsoft Research Ph.D. fellowship,
National Science Foundation 
IIS-1409003, IIS-1350598, IIS-1714140, CCF-1525932, and CCF-1733556, 
Office of Naval Research N00014-16-1-3075 and N00014-17-1-2428 
and by a
Sloan Research Fellowship.
} 
\end{abstract}

\section{Introduction}

The simplest motion planning model~\cite{L06, S04} involves a robot system~$\calR$ moving in a workspace $\calW \in \{ \R^2, \R^3 \}$ cluttered with obstacles~$\calO$. 
Given an initial placement $s$ and a target placement $t$ of~$\calR$, we wish to determine whether there exists a collision-free motion of~$\calR$ connecting~$s$ and~$t$, and, if so, to plan such a motion.

Typically,~$\calR$ is abstracted as a point, or a \emph{configuration}, in a high-dimensional space called the \emph{configuration space}~\calX, where each configuration maps~$\calR$ to a specific placement in~$\calW$~\cite{L83}.
The configuration space is subdivided into the free and forbidden spaces, corresponding to placements of~\calR that are free or that intersect with an obstacle, respectively.
Since the general motion-planning problem is
\PSPACE-hard~\cite{HSS84},  
a common approach is to use sampling-based algorithms~\cite{HLM99,KF11,KSLO96,LK99}.
These algorithms approximate~$\calX$ via a discrete graph~$\calG$ called a \emph{roadmap}. 
Vertices in~$\calG$ correspond to sampled configurations in~$\calX$, and edges in~$\calG$ correspond to local paths (typically straight lines).
Approximately solving the motion-planning problem thus reduces to the problem of finding a collision-free shortest path in~$\calG$ between the vertices corresponding to $s$ and $t$.

Testing if a vertex or an edge of~$\calG$ is collision free requires one or more geometric tests called \emph{collision detection}.
Arguably, collision detection in general, and edge evaluation in particular, are the most time-consuming operations in sampling-based algorithms~\cite{CBHKKLT05,L06}.
Thus, path planning on~\calG differs from traditional search algorithms such as Dijkstra~\cite{D59} or A*~\cite{HNR68},  where the graph is typically implicit and large, but edge evaluation is trivial compared to search.
Indeed, much recent work in motion planning focuses on evaluating the edges of~$\calG$ \emph{lazily}, informed by the search algorithm as it progresses~\cite{BK00,CSCS17,DS16,K15,SH15}.

In a recent paper, Dellin and Srinivasa~\cite{DS16} present a unifying formalism for shortest-path problems where edge evaluation dominates the running time of the algorithm.
Specifically, they define and investigate a class of algorithms termed \emph{Lazy Shortest Path (LazySP)}, which run any shortest-path algorithm on~\calG followed by evaluating the edges along that shortest path.
The algorithms are differentiated by an \emph{edge selector} function, which chooses the 
 edges the algorithm evaluates along the shortest path.
Dellin and Srinivasa show that several prominent motion-planning algorithms are captured by LazySP, using a suitable choice of this selector.
Furthermore, they  extensively evaluate the algorithm  empirically on a wide range of edge selectors. 
Their experiments range from 
toy scenarios, which demonstrate the advantages of each edge selector, to articulated 7D motion-planning problems that show that, using this approach, nontrivial problems can be solved within seconds.

LazySP was proposed as an algorithm that attempts to minimize the overall number of edges evaluated (or \emph{queried}) in the process of solving a given motion-planning problem. 
A natural question to ask is
\begin{quote}
\emph{... what is the query complexity of LazySP, and is its query complexity the best possible?}
\end{quote}
In other words, can we bound the number of edges evaluated by LazySP as a function of the complexity of the roadmap~\calG? And are there algorithms not in this class that have lower query complexity?

To address these questions, we need to explicitly model how queries are answered. 
We start in Section~\ref{sec:det} by considering the \emph{deterministic} setting, where the set of collision-free edges is determined upfront. 
Our first result establishes that, in this model, it is optimal to always test edges along the shortest path, i.e., in every instance there is an edge selector for which LazySP is optimal. Although the edge selector in question requires full access to the set of collision-free edges, so the real-world implications of this result are limited, it does provide a theoretical underpinning for the idea of restricting queries to shortest paths, which lies at the heart of LazySP. 

In practice, we are interested in a slightly more complex model, which we call the \emph{probabilistic} setting; it is explored in Section~\ref{sec:rand}.
Here, each edge is endowed with a probability of being in collision---a common assumption in motion planning (see, e.g.,~\cite{CDS16})---and we are interested in policies that minimize the query complexity, that is, policies that minimize the \emph{expected} number of steps until the algorithm finds the shortest path or declares that no path exists.
We first show that there are instances where LazySP is suboptimal, regardless of the edge selector. In a nutshell, we describe a delicate construction where initially querying edges that are not on the shortest path provides valuable information for subsequent queries. 

So, in the probabilistic setting, LazySP is just a proxy for the (presumably intractable) optimal policy, but is it a good proxy? We answer this question in the positive. Our main result is that the query complexity of LazySP (with an edge selector satisfying a certain \emph{connectivity} property) is bounded by $ O(n/p)$ edge evaluations with high probability, where 
$n$ is the number of vertices in~\calG,
and
$p$ is the minimum probability on any edge.
We complement this result with an $\Omega(n/p)$ lower bound that holds for \emph{every} algorithm that is guaranteed to be correct. We conclude that, from a worst-case viewpoint, LazySP is, in fact, (asymptotically) optimal.

\section{The Model}
\label{sec:model}

An instance of our problem is given by a multigraph $G = (V, E)$ --- that is, there may be multiple edges between two vertices --- whose set of vertices includes two distinguished vertices: the source vertex $s$ and the target vertex $t$. We deal with multigraphs, rather than simple graphs, mostly for ease of exposition; see Section~\ref{sec:disc} for a discussion of this point. We simply refer to $G$ as a \emph{graph} hereinafter. 

We say that a graph $G' = (V, E')$ is a subgraph of $G$ if $E'\subseteq E$. Given a graph $G = (V, E)$ and its subgraph $G' = (V, E')$, an oracle $\O_{G'}^G$ is a function that takes as input an edge $e\in E$ and returns \textsc{Yes} if $e\in E'$, and \textsc{No} otherwise. When $G$ is clear from the context, we suppress it in this notation.

In the \emph{path-finding} problem, an algorithm $\alg$ is given a graph $G$ and an oracle $\O_{G'}$. The goal of the algorithm is to find the shortest $\st$ path in $G'$. Since $G'$ is not revealed to the algorithm directly, the algorithm has to query $\O_{G'}$ on specific edges of $G$ to find a path. 
That is, $\alg(G, \O_{G'})$ issues a sequence of edge queries to $\O_{G'}$, and upon termination, returns an $\st$ path or decides that none exists.

For an algorithm to be \emph{correct}, we require that it correctly identify a shortest $\st$ path in $G'$, or that it \emph{certify} that none exists (by invalidating every possible path), for any $G$ and $G'\subseteq G$. 
Therefore, a correct algorithm can only terminate when the solution it provides continues to be correct even if the responses to unqueried edges are selected adversarially.
More formally, let $Q\subseteq E$ be the set of edges queried by a correct algorithm $\alg$ on $G$ and $\O_{G'}$. Let $Q_y = Q \cap E'$ and $Q_n = Q \setminus E'$ be the set of queried edges that, respectively,  belong and do not belong to $G'$. Then $\alg$ can terminate only if there is a shortest $\st$ path in $G'$, denoted $P^*$, such that $P^*\subseteq Q_y$, and there is no $\st$ path in $(V,E \setminus Q_n)$ that is shorter than $P^*$. If no path exists, then $\alg$ can terminate only if there is no $\st$ path in $(V,E \setminus Q_n)$.

Clearly, an algorithm that first queries all edges in $E$, thereby fully constructing $G'$, and only then finds the shortest $\st$ path, is a correct algorithm. However, such an algorithm may use a large number of queries, some of which may be unnecessary. In this paper, we are interested in  algorithms  that find a shortest $\st$ path using a minimal number of queries.
We denote the number of queries that $\alg$ makes on input $G$ and $\O_{G'}$ by $\cost(\alg(G, \O_{G'}))$.

We are especially interested in the $\Lazies$ class of algorithms, introduced by Dellin and Srinivasa~\cite{DS16}. Any algorithm in the class $\Lazies$ is determined by an \emph{edge selector}, which, informally, decides which edge to query on a given $\st$ path. Formally, let $\P$ be the set of all $\st$ paths in $G$. An edge selector is a function $f: \P \times 2^E\times 2^E \rightarrow E$ that takes any $\st$ path $P\in \P$, a subset of queried edges $Q_y$ that are in $E'$, and a subset of queried edges $Q_n$ that are not in $E'$, and returns an edge $e\in P\setminus Q$. Examples of edge selectors include:
\begin{itemize}
\item \emph{Forward edge selector}: Returns the first unqueried edge in $P$, that is, the one closest to $s$.  

\item \emph{Backward edge selector}: Returns the last unqueried edge in $P$, that is, the one closest to $t$. 

\item\emph{Bisection edge selector}: 
Returns an unqueried edge in $P$ which is furthest from an evaluated edge on the path. 
\end{itemize}

Given an edge selector $f$, the corresponding $\Lazy_f\in \Lazies$ is described in Algorithm~\ref{alg:LazySP}.
At a high level, $\Lazy_f$, in a given time step, considers a candidate \emph{shortest} $\st$ path $P$ over all those edges whose existence has not yet been ruled out by the oracle. Then, it uses the edge selector to query an unqueried edge $e\in P$. It updates the set of queried edges and repeats.
At any point, if the edges of path $P$ that is currently under consideration are all verified, the algorithm terminates and returns $P$. If no viable $\st$ paths remain, the algorithm terminates and certifies that no $\st$ path exists in $G'$.

\begin{algorithm}[ht]
\textbf{input:} Graph $G$ and oracle $\O_{G'}$

\medskip

$Q_n \gets \emptyset$
\tcc*{ \footnotesize {in-collision evaluated edges }}

$Q_y \gets \emptyset$
\tcc*{ \footnotesize {collision-free evaluated edges}}

\While{there exists\footnotemark{} a shortest $\st$ path $P$ in $E\setminus Q_n$} {
    \lIf{$P\subseteq Q_y$}{\Return $P$}
    
    $e\gets f(P,Q_y,Q_n)$
    \tcc*{ \footnotesize {select edge along $P$}}
    
    \lIf{$\O_{G'}(e) = \textsc{Yes}$}{$Q_y \gets Q_y \cup \{e\}$}
    \lElse{$Q_n \gets Q_n \cup \{e\}$}
}

\Return{$\emptyset$};
\caption{\textsc{$\Lazy_f$}}
\label{alg:LazySP}
\end{algorithm}
\footnotetext{If there are multiple  $\st$ paths of the same length, the algorithm breaks ties according to a consistent tie-breaking rule.}

It is not hard to see that any algorithm in the class $\Lazies$ is a correct algorithm. This is due to the fact that these algorithms always consider the shortest path that has not yet been ruled out. Therefore, upon termination, they return the shortest $\st$ path in $G'$. Moreover, an edge selector never returns an edge that has been queried before and, hence, these algorithms never query an edge more than once. It follows that any such algorithm eventually terminates. See the paper of Dellin and Srinivasa~\cite{DS16} for a more detailed discussion of the $\Lazies$ class.

Let us conclude this section with an example of the execution of \Lazy with the forward edge selector, which also illustrates some of the terminology introduced earlier. Figure~\ref{fig:example} shows the set of vertices $V=\{s,t,a,b,c,d,e\}$ shared by $G$ and $G'$, as well as two types of edges: those in $E'$, shown as solid edges, and those in $E\setminus E'$, shown as dashed edges. The order in which edges are queried is shown as labels on the edges. This order on edge queries is induced by evaluating shortest paths in the following order: $sat$, $sabt$, $scdt$, $sabdt$, and $sabdet$. 

\begin{figure}[h]
\begin{center}
\begin{tikzpicture}
	\usetikzlibrary{decorations.pathreplacing}
    \tikzstyle{circ}=[draw,circle,fill=white,minimum size=9pt,
                            inner sep=0pt]

    \draw (0,0) node[circ] (s) {\small $s$};
	\draw (1.5,-1) node[circ] (c) {\small $c$};
	\draw (1.5,0) node[circ] (b) {\small $b$};
	\draw (1.5,1) node[circ] (a) {\small $a$};
	\draw (3,-1) node[circ] (d) {\small $d$};
	\draw (4.5,-1) node[circ] (e) {\small $e$};
	\draw (6,0) node[circ] (t) {\small $t$};

	\draw (s) -- node [above] {\tiny $1$} (a);
	\draw [dashed] (s) -- node [below] {\tiny $5$} (c);
	\draw [dashed] (a) -- node [above=-1pt] {\tiny $2$} (t);
	\draw (a) -- node [right=-1pt] {\tiny $3$} (b);
	\draw [dashed] (b) -- node [above=-1pt] {\tiny $4$} (t);
	\draw (c) -- (b);
	\draw (c) -- (d);
	\draw [dashed] (d) -- node [above=-1pt] {\tiny $7$} (t);
	\draw (d) -- node [below=-1pt] {\tiny $8$} (e);
	\draw (e) -- node [below] {\tiny $9$} (t);
	\draw (b) -- node [above] {\tiny $6$} (d);
\end{tikzpicture}
\caption{Example of the execution of \Lazy with the forward edge selector. Solid edges are in $E'$, dashed edges are in $E\setminus E'$.}
\label{fig:example}
\end{center}
\vspace{-5mm}

\end{figure}
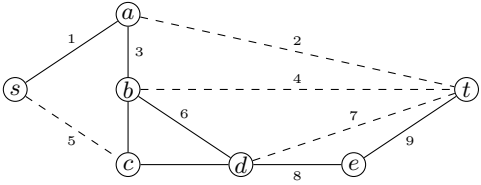


\section{The Deterministic Setting}
\label{sec:det}

In this section, we consider the problem of using a minimum number of edge queries to find a shortest $\st$ path, or verifying that no $\st$ path exists, when a subgraph $G'\subseteq G$ is \emph{deterministically} chosen (but not revealed to the algorithm).

In more detail, let $G = (V, E)$ be a graph, and let $G' = (V, E')$ be its subgraph. Recall that $\cost(\alg(G, \O_{G'}))$ denotes the number of edge queries $\alg$ makes on graph $G$ when oracle responses are according to graph $G'$. 
Our first result asserts that the class $\Lazies$ is optimal in this setting, in the sense that for any correct algorithm there is a \Lazy algorithm (with a specific edge selector) that finds the shortest path using at most as many queries.

\begin{theorem}
\label{thm:det}
For any graph $G$ and $G'\subseteq G$, and any correct algorithm $\alg$, there exists $\alg' \in \Lazies$ such that 
\[ \cost(\alg'(G, \O_{G'})) \leq \cost(\alg(G, \O_{G'})).
\]
\end{theorem}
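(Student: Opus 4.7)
My plan is to construct an algorithm $\Lazy_f \in \Lazies$---with both the edge selector and the tie-breaking rule tailored to $\alg$, $G$, and $G'$---whose queries form a subset of those issued by $\alg$; this will immediately give the cost bound. Let $Q_y, Q_n$ denote the collision-free and in-collision edges $\alg$ queries on $(G, \O_{G'})$, set $Q = Q_y \cup Q_n$, and let $P^*$ be the output of $\alg$ (with $P^* = \emptyset$ when $\alg$ reports no path). By the correctness condition from Section~\ref{sec:model}, either $P^* \subseteq Q_y$ and no $\st$ path in $(V, E \setminus Q_n)$ is strictly shorter than $P^*$, or $P^* = \emptyset$ and $(V, E \setminus Q_n)$ contains no $\st$ path.

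For $\Lazy_f$ I will fix a consistent tie-breaking rule that prefers $P^*$ whenever it is among the shortest $\st$ paths in $(V, E \setminus Q_n')$---where $Q_y', Q_n'$ denote $\Lazy_f$'s running query sets---and breaks other ties arbitrarily. Note that $P^* \subseteq Q_y \subseteq E'$ is disjoint from $Q_n' \subseteq Q_n \subseteq E \setminus E'$, so $P^*$ is always a feasible candidate path whenever $\alg$ returned one. The edge selector $f(P, Q_y', Q_n')$ will return any edge in $P \cap (Q \setminus (Q_y' \cup Q_n'))$ when that set is nonempty, with an arbitrary fallback otherwise.

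The core step is to prove, by induction on the while-loop iteration, that $Q_y' \cup Q_n' \subseteq Q$ is maintained and that whenever the loop does not terminate, the candidate $P$ contains an edge of $Q \setminus (Q_y' \cup Q_n')$---so the fallback in $f$ is never triggered. The inductive step splits into two cases. First, if tie-breaking selects $P = P^*$, then because the loop did not terminate we have $P^* \not\subseteq Q_y'$, and a witnessing edge lies in $Q_y \setminus Q_y' \subseteq Q \setminus (Q_y' \cup Q_n')$. Otherwise $P \ne P^*$, which by the tie-breaking rule forces $|P| < |P^*|$ (or $\alg$ returned $\emptyset$); in either sub-case $\alg$'s correctness forbids $P \subseteq E \setminus Q_n$, so $P$ must contain an edge of $Q_n \setminus Q_n' \subseteq Q \setminus (Q_y' \cup Q_n')$. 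Consequently every query of $\Lazy_f$ lies in $Q$, so $\cost(\Lazy_f(G, \O_{G'})) \leq |Q| = \cost(\alg(G, \O_{G'}))$; termination follows since $Q$ is finite and each iteration strictly enlarges $Q_y' \cup Q_n'$.

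The main obstacle will be handling ties correctly: without biasing toward $P^*$, $\Lazy_f$ could commit to a ``parallel'' shortest path distinct from $P^*$ and burn queries on edges $\alg$ never needed to touch. A minor subtlety is that the tie-breaker (and selector) are chosen as a function of the particular $(G, G', \alg)$; this is permissible because the theorem is existential over $\Lazies$, and each specific LazySP variant is defined by picking both an edge selector and a fixed consistent tie-breaking rule.
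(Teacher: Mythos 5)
Your proof is correct, but it takes a genuinely different route from the paper's. The paper first characterizes an \emph{optimal} query set $Q^* = P^* \cup Q^*_n$, where $Q^*_n$ is a minimum cover of all $\st$ paths shorter than $P^*$ by edges of $E\setminus E'$; it shows every correct algorithm must query at least $|Q^*|$ edges, and then exhibits a \Lazy algorithm that queries exactly $Q^*$ (the delicate step being that the greedy path-by-path ordering never strands an edge of $Q^*_n$, which is argued via the minimality of the cover). You instead dominate $\alg$ directly: your selector only ever picks edges from $\alg$'s own query set $Q$, and your case analysis (candidate is $P^*$, hence has an unverified edge in $Q_y$; candidate is strictly shorter than $P^*$ or $\alg$ reported no path, hence by $\alg$'s correctness it must cross $Q_n$) correctly shows the fallback is never needed, so $\cost(\Lazy_f)\le|Q|$. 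Your handling of ties---biasing the consistent tie-breaking rule toward $P^*$---is exactly the right fix for the one place the argument could otherwise fail, and choosing $f$ as a function of $(G,G',\alg)$ is legitimate since the theorem is existential. Your approach is more elementary and sidesteps the set-cover machinery entirely; what the paper's approach buys in exchange is an explicit description of the optimal cost (shortest path plus minimum invalidating cover), which it then leverages in the ensuing discussion of the omniscient edge selector and the $\mathsf{NP}$-hardness of computing it.
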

\begin{proof}
Let $P^*$ be a shortest $\st$ path in $G'$ and let $P_1, \dots, P_m$ be the $\st$ paths in $G$ that are strictly shorter than $P^*$, ordered by their length. If no path in $G'$ exists, $P_1, \dots, P_m$ is the list of all $\st$ paths in $G$. Let $Q$ be the set of edges queried by $\alg(G, \O_{G'})$, $Q_y = Q \cap E'$, and $Q_n = Q\setminus E'$. 

Let $Q^*$ be the set that includes all edges of $P^*$, all of which exist in $G'$, and an optimal  cover for the sets $P_i$ using edges that do not belong to $G'$. That is, let $Q^* =  P^* \cup Q^*_n$, where
\[ Q^*_n = \argmin_{S\subseteq E \setminus E'} \{ |S|:\ \forall i\in[m], P_i \cap S \neq \emptyset \}.
\]
We argue that $|Q| \geq |Q^*|$. This is due to the fact that  correctness of $\alg$ implies that $P^* \subseteq Q_y$, and any path that is shorter than $P^*$ has an invalidated edge, i.e., for all $i\in[m]$, $P_i \cap Q_n \neq \emptyset$. Note that the latter condition shows that $Q_n$ is a cover for the sets $P_i$ using edges $E\setminus E'$, so by the optimality of $Q^*_n$, we have
\[ |Q| = |Q_y| + |Q_n| \geq |P^*| + |Q^*_n| = |Q^*|.
\]
It remains to show that there is  $\alg' \in \Lazies$ that only queries edges in $Q^*$. 
Let $\alg'$ be the algorithm that first queries an edge in $Q^*_n\cap P_1$ (there must be one), then an edge in $Q^*_n\cap P_2$ (if it is nonempty), and so on, until $Q^*_n\cap P_m$ (if it is nonempty), and finally queries all the edges in $P^*$. We argue that $\alg'$ must query all the edges in $Q^*$. Indeed, the only difficulty is that, in principle, it may be the case that at some point $P_1,\ldots,P_k$ have already been invalidated, and there is some $e\in Q^*_n$ such that $e\notin P_{k+1}\cup\cdots\cup P_m$, meaning that $e$ cannot be queried in the future. But, in that case, $e$ is not needed in order to invalidate the paths $P_1,\ldots,P_m$, in contradiction to the optimality of $Q^*$ (and that of $Q^*_n$, specifically). 

Note that $\alg'$ has the property that at any time it only queries edges on the shortest $\st$ path that has not been invalidated yet. Clearly, it is possible to define an edge selector that makes the same choices as $\alg'$. We conclude that $\alg'$, whose cost is at most that of $\alg$, can be represented as a member of $\Lazies$. 
\end{proof}

We can alternatively interpret Theorem~\ref{thm:det} in a model where \Lazy may be equipped with an \emph{omniscient} edge selector that has full access to $G'$. In particular, this omniscient edge selector can compute $Q^*$, which, by the way, requires solving an $\mathsf{NP}$-hard variant of \textsc{Set Cover}. Even though the algorithm already knows $G'$, it still has to issue queries as it must \emph{certify} that $P^*$ is indeed the shortest path (if an $\st$ path exists). 

Clearly, an omniscient edge selector is impractical. The significance of Theorem~\ref{thm:det}, therefore, is mostly conceptual. It suggests that the restriction that algorithms must always query edges on the current shortest path is not a barrier to optimality. This gives theoretical justification for the $\Lazies$ class. However, as we shall see shortly, the message is more nuanced when the outcomes of queries are randomized.

\section{The Probabilistic Setting}
\label{sec:rand}

In this section, we consider a probabilistic variant of the setting we investigated in Section~\ref{sec:det}. We view the probabilistic model as a closer fit with reality than its deterministic counterpart.
 
In more detail, let $p\in (0,1)$ be the probability that any given edge in $G$ exists in $G'$. In a more general setting with different probabilities associated with different edges, we can simply think of $p$ as a lower bound on the probabilities for query upper bounds, or as an upper bound on the probabilities for query lower bounds. We denote by $G'\sim_p G$ the process of generating a random graph $G' = (V, E')$ from~$G$ by allowing each $e\in E$ to belong to $E'$ with probability~$p$, independently. We suppress $p$ in this notation when it is clear from the context.

In the current setting, a subgraph $G' = (V, E')\sim_p G$ is realized according to edge probability $p$, but it is not revealed to the algorithm. As before, the algorithm receives $G$ and $\O_{G'}$ as input, and uses $\O_{G'}$ to verify whether an edge exists. The goal of the algorithm is to minimize the \emph{expected} number of edge queries over $G'\sim_p G$, such that it \emph{correctly} either
\begin{enumerate}
\item returns a path that is the shortest $\st$ path in $G'$, or
\item certifies that there is no $\st$ path in $G'$.
\end{enumerate}
Note that, although the expected number of queries an algorithm issues is taken over $G'\sim G$, the correctness condition must hold for \emph{every} $G'$.

\subsection{Suboptimality of LazySP}

Our next result asserts that the class of algorithms $\Lazies$ does not always include an optimal query policy, which minimizes the expected number of queries. 
At a high level, the reason behind this is that, in some graphs, querying a few edges that are not on the shortest path can identify the most important regions of the graph, which should be explored next. 
To see this, consider the graph in Figure~\ref{fig:full}. In this graph, the arcs marked by $A$ and $B$ each include multi-edge structures shown in Figures~\ref{fig:thick} and \ref{fig:string}, respectively. Structures~$A$ and $B$ are designed so that arcs labeled by $B$ are much longer than $A$, so any \Lazy algorithm starts by querying the arcs labeled by $A$.

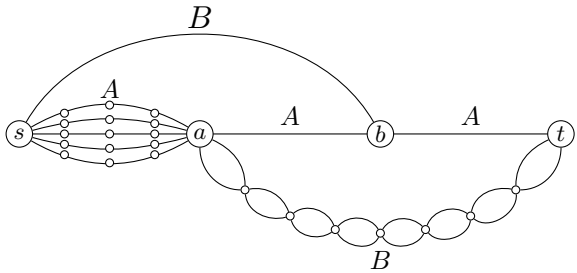
\begin{figure}[b]
	\begin{center}
\begin{tikzpicture}[scale = 1.2]
	\usetikzlibrary{decorations.pathreplacing}
    \tikzstyle{every node}=[draw,circle,fill=white,minimum size=10pt,
                            inner sep=0pt]

    \draw (0,0) node (A) {{\small $s$}}
    -- ++ (2cm, 0) node (B) {{\small $a$}} 
    -- ++ (2cm, 0) node (C){{\small $b$}}
    -- ++ (2cm, 0) node (D) {{\small $t$}};

    \draw (1cm, 0.5cm) node[draw = none, fill = none]{$A$};
    \draw (3cm, 0.2cm) node[draw = none, fill = none]{$A$};
    \draw (5cm, 0.2cm) node[draw = none, fill = none]{$A$};
    
    \draw (A) to [out=60,in=120] (C) node[midway, draw = none, fill = none, xshift = 2cm, yshift = 1.3cm]{$B$};    

    \draw (A) to [out=30,in=150] (B){};
	\draw (A) to [out=15,in=165] (B){};
    \draw (A) to [out=-30,in=-150] (B){};
	\draw (A) to [out=-15,in=-165] (B){};

	\draw (04cm, -1.4cm) node[draw = none, fill = none]{$B$};
	
	\draw (0.5cm, 0cm) node[draw, fill =white,minimum size= 3pt ]{};
	\draw (1cm, 0cm) node[draw, fill =white,minimum size= 3pt ]{};
	\draw (1.5cm, 0cm) node[draw, fill =white,minimum size= 3pt ]{};

	\draw (1cm, 0.32cm) node[draw, fill =white,minimum size= 3pt ]{};
 	\draw (1cm, -0.32cm) node[draw, fill =white,minimum size= 3pt ]{};
	\draw (1cm, 0.16cm) node[draw, fill =white,minimum size= 3pt ]{};
 	\draw (1cm, -0.16cm) node[draw, fill =white,minimum size= 3pt ]{};

	\draw (0.5cm, 0.23cm) node[draw, fill =white,minimum size= 3pt ]{};	
	\draw (0.5cm, - 0.23cm) node[draw, fill =white,minimum size= 3pt ]{};	
	\draw (1.5cm, 0.23cm) node[draw, fill =white,minimum size= 3pt ]{};	
	\draw (1.5cm, -0.23cm) node[draw, fill =white,minimum size= 3pt ]{};	
	\draw (0.5cm, 0.115cm) node[draw, fill =white,minimum size= 3pt ]{};	
	\draw (0.5cm, - 0.115cm) node[draw, fill =white,minimum size= 3pt ]{};	
	\draw (1.5cm, 0.115cm) node[draw, fill =white,minimum size= 3pt ]{};	
	\draw (1.5cm, -0.115cm) node[draw, fill =white,minimum size=3pt ]{};

    \draw (2.5,-0.62) node[minimum size=3pt] (B1){};	
    \draw (3,-0.91) node[minimum size=3pt] (B2){};	
    \draw (3.5,-1.06) node[minimum size=3pt] (B3){};	
    \draw (4,-1.1) node[minimum size=3pt] (B4){};	
    \draw (4.5,-1.06) node[minimum size=3pt] (B5){};	
    \draw (5,-0.91) node[minimum size=3pt] (B6){};	
    \draw (5.5,-0.62) node[minimum size=3pt] (B7){};	
  
    \draw (B) to [out=-90,in=180] (B1){};
    \draw (B1) to [out=-85,in=195] (B2){};    
    \draw (B2) to [out=-75,in=215] (B3){};    
    \draw (B3) to [out=-60,in=245] (B4){};        
    \draw (B4) to [out=-60,in=245] (B5){};      
    \draw (B5) to [out=-50,in=265] (B6){};     
    \draw (B6) to [out=-40,in=275] (B7){};     
    \draw (B7) to [out=-0,in=270] (D){};

    \draw (B) to [out=-25,in=90] (B1){};
    \draw (B1) to [out=10,in=110] (B2){};    
    \draw (B2) to [out=40,in=110] (B3){};    
    \draw (B3) to [out=50,in=120] (B4){};        
    \draw (B4) to [out=60,in=135] (B5){};      
    \draw (B5) to [out=70,in=145 ] (B6){};     
    \draw (B6) to [out=80,in=170] (B7){};     
    \draw (B7) to [out=90,in=205] (D){};         

\end{tikzpicture}
\end{center}

%
	\caption{A graph for which no algorithm in $\Lazies$ is an optimal query policy. All arcs labeled by $A$ and $B$ include multi-edge structures shown in Figures~\ref{fig:thick} and \ref{fig:string}, respectively. For clarity, we include two examples of these structures on $sa$ and $at$ in this figure.}
	\label{fig:full}
\end{figure}

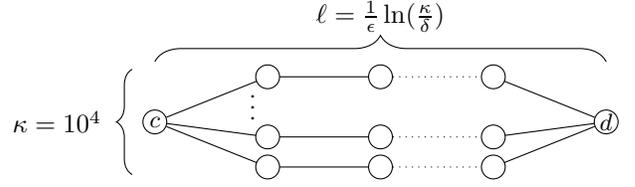
\begin{figure}[t]
	\vspace{-2.5mm}
	\begin{center}

\begin{tikzpicture}[node distance   = 1.5 cm]
	\usetikzlibrary{decorations.pathreplacing}
    \tikzstyle{every node}=[draw,circle,fill=white,minimum size=9pt,
                            inner sep=0pt]

    \draw (0,0) node (A) {\small $c$}
    --  ++(1.5cm, -0.2cm) node (B) {}
    --  ++(1.5cm, 0) node (C) {};   
    \draw (4.5cm,-0.2cm) node (D) {}
    --  ++(1.5cm,+0.2cm) node (E) {\small $d$};
     \draw [dotted] (C) to (D);   
      
    \draw (1.3cm, 0.3cm) node[draw = none, fill = none]{\small $\vdots$};      
      
     \draw (A) -- ++(1.5cm,0.6) node (B1) {}
     --  ++(1.5cm,0) node (C1) {}; 
    \draw (4.5cm,0.6) node (D1) {}  --  (E); 
     \draw [dotted] (C1) to (D1);         
       
    \draw (A) -- ++(1.5cm,-0.6) node (B2) {}
     --  ++(1.5cm,0) node (C2) {}; 
    \draw (4.5cm,-0.6) node (D2) {}  --  (E); 
     \draw [dotted] (C2) to (D2);        
      
    \draw [decorate,decoration={brace,amplitude=10pt}](0,0.8 cm) -- (6cm,0.8 cm) node [draw = none, fill = none, midway,yshift=0.6cm]{$\ell = \frac{1}{\epsilon} \ln(\frac \kappa \delta)$};
    \draw [decorate,decoration={brace,amplitude=6pt}](-0.3,-0.7 cm) -- (-0.3cm,0.7 cm) node [draw = none,fill = none,midway,xshift=-1cm]{$\kappa = 10^4$};
\end{tikzpicture}
\end{center}
    
	\caption{Structure $A$ used on arcs $sa$, $ab$, and $bt$ in Figure~\ref{fig:full}.
	We refer to one path connecting $c$ and $d$ as a ``string''.}
	\label{fig:thick}
\end{figure}

\begin{figure}[t]
	\begin{center}
\begin{tikzpicture}[node distance   = 1.5 cm]
	\usetikzlibrary{decorations.pathreplacing}
    \tikzstyle{every node}=[draw,circle,fill=white,minimum size=10pt,
                            inner sep=0pt]

    \draw (0,0) node (A) {\small $c$};
    \draw (1.5cm,0) node (B) {};
    \draw (3cm,0) node (C) {};
    \draw (4.5cm,0) node (D) {};
    \draw (6cm,0) node (E) {\small $d$};
    
     \draw [dotted] (C) to (D);

    \draw (A) to [out=45,in=135] (B);    
    \draw (B) to [out=45,in=135] (C);     
    \draw (D) to [out=45,in=135] (E);          
    \draw (A) to [out=-45,in=-135] (B);    
    \draw (B) to [out=-45,in=-135] (C);     
    \draw (D) to [out=-45,in=-135] (E);
    
    \draw [decorate,decoration={brace,amplitude=10pt}]
(0,0.5 cm) -- (6cm,0.5 cm) node [draw = none, fill = none, midway,yshift=0.55cm]{$\ell' = 3 \ell$};

    \draw [decorate,decoration={brace,amplitude=6pt}]
(-0.3,-0.5 cm) -- (-0.3cm,0.5 cm) node [draw = none,fill = none,midway,xshift=-1cm]{$\kappa' = 2$};
\end{tikzpicture}
\end{center}

	\caption{Structure $B$ used on arcs $sb$ and $at$ in Figure~\ref{fig:full}.}
	\label{fig:string}
\end{figure}
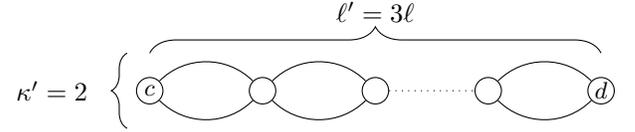

We compare the cost of any $\Lazy \in \Lazies$ (for an arbitrary edge selector) to that of an algorithm \alg defined as follows. \alg first queries all the edges in 
the multi-edge structures $B$ on arcs $sb$ and $at$. There are two cases:

\begin{enumerate}
\item A path exists in both of the structures $sb$ and $at$, or in neither one: In this case, \alg calls $\Lazy$ on the original graph.
\item There is a path in exactly one of the $sb$ or $at$ structures: Without loss of generality (by symmetry) assume that $at$ has a path. Then, \alg queries the edges in structure $A$ on $sa$, $ab$ and $bt$ in order, until it verifies that at least one of these structures does not have a path or all do. Then, it returns the shortest $\st$ path on the edges whose existence has been verified by the queries, or certifies that no $\st$ path exists.
\end{enumerate}

It is not hard to see that \alg demonstrates the required guarantees for a correct algorithm, i.e., upon its termination it correctly certifies that there is no $\st$ path or returns the shortest $\st$ path in the realized graph.

Let us provide an overview of why \alg queries fewer edges than any \Lazy algorithm in expectation.
The structures~$A$ and $B$ are designed so that structure $A$ requires more queries than structure $B$. Additionally, structure $A$ almost certainly fails to have a path, while structure $B$ has a path with a probability close to $\frac12$. Note that such a graph almost certainly does not have a path, so a large fraction of $\E[\mathrm{cost}(\alg(G, \O_{G'}))]$ comes from the effort required to \emph{invalidate} possible $\st$ paths.

In the first case of \alg (a path exists in both $ab$ and $at$, or in neither one), it queries more edges than \Lazy. However, we argue that \alg uses much fewer queries in its second case. The probability of existence of a path in structure $B$ is chosen so that the second case happens with significant probability (almost $\frac 12$), in which case the overall savings in the analysis of the second case bring down the total expected cost of $\alg$ compared to $\Lazy$.

In slightly more detail, the crux of the proof is the case where~$sb$ does not have a path and $at$ has a path (an example of the second case of \alg). To invalidate all possible $\st$ paths, it suffices to certify that structure $A$ on $sa$ does not have a path. Therefore,  $\alg$ terminates after querying only one $A$ structure, with high probability, in addition to querying two $B$ structures on $sb$ and $at$.
On the other hand, \Lazy does not know which one of $sb$ or $at$ has a path, so with probability at least $\frac 12$ it first queries some $A$ structure other than $sa$, in which case it has to also query and verify that no path exists in $sa$. Therefore, \Lazy has to query~$1.5$ $A$ structures in expectation.
We design structures~$A$ and~$B$ so that half the cost of checking an additional~$A$ structure is much larger than the initial cost that \alg invests in querying edges in two $B$ structures.

The next theorem and its proof formalize the foregoing discussion. 

\begin{theorem}
\label{thm:rand}
There is a graph $G =(V, E)$ and $p\in(0,1)$ for which the optimal query policy is not in $\Lazies$.
\end{theorem}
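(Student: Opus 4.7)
The plan is to instantiate the construction sketched in Figures~\ref{fig:full}--\ref{fig:string} and then to compare the expected query cost of an arbitrary $\Lazy_f \in \Lazies$ against that of the tailored algorithm $\alg$ defined just before the theorem statement. The graph $G$ has four backbone vertices $s, a, b, t$: the three short arcs $sa$, $ab$, $bt$ are each realized by a copy of structure $A$ (comprising $\kappa = 10^4$ parallel strings of length $\ell = \Theta(\epsilon^{-1}\log(\kappa/\delta))$) and the two long arcs $sb$, $at$ are each realized by a copy of structure $B$ (with $\kappa' = 2$ parallel strings of length $\ell' = 3\ell$). The edge probability $p$ is tuned so that a single $A$-string survives with probability at most $\delta/\kappa$, while a single $B$-string survives with probability close to $1 - 1/\sqrt{2}$. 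These tunings yield the three qualitative features we rely on: (i) a given $B$-block contains a surviving path with probability close to $\tfrac{1}{2}$; (ii) a given $A$-block almost surely has no surviving path; and (iii) the initial shortest candidate path is the backbone $s\to a\to b\to t$ of length $3\ell$, which is strictly shorter than either $B$-detour $s\to b\to t$ or $s\to a\to t$ (both of length $\ell+\ell' = 4\ell$), so every edge selector is forced to probe an $A$-block first.

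From here the argument has two halves. First I would upper-bound $\E[\cost(\alg(G,\O_{G'}))]$: the deterministic initial phase costs $O(\kappa'\ell') = O(\ell)$ queries to verify both $B$-blocks, and with probability close to $\tfrac{1}{2}$ the algorithm enters the favorable second case where exactly one of $sb$, $at$ survives; by the observation spelled out in the discussion before the theorem, $\alg$ then certifies non-existence of an $\st$ path by invalidating only the single $A$-block on the opposite backbone segment, which costs $O(\kappa/(1-p))$ in expectation because each of its $\kappa$ strings is almost surely broken near its start. The complementary event (both $B$-blocks alive or both dead) carries the remaining probability and is charged the worst-case cost of invalidating all three $A$-blocks. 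Second I would lower-bound $\E[\cost(\Lazy_f(G,\O_{G'}))]$ uniformly in $f$: because the initial candidate traverses all three $A$-blocks, $\Lazy_f$ receives no information about any $B$-edge before invalidating some $A$-block, and the identity of the $A$-block it invalidates first is statistically independent of the pair $(sb,at)$. Hence in the favorable realization where exactly one $B$-block survives, with probability at least $2/3$ the first $A$-block invalidated by $\Lazy_f$ is the ``wrong'' one (it does not lie on the unique surviving backbone through the alive $B$-block), forcing $\Lazy_f$ to invalidate at least one additional $A$-block before terminating. The cost of invalidating a single additional $A$-block exceeds the cost of the two initial $B$-block probes of $\alg$ by a constant factor by construction, producing the strict gap $\E[\cost(\alg)] < \E[\cost(\Lazy_f)]$ for every edge selector $f$.

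The main obstacle will be the joint calibration of $p$, $\ell$, $\ell'$, $\kappa$, $\kappa'$, $\epsilon$, and $\delta$ so that all the quantitative requirements hold simultaneously: (a) a $B$-block's path-existence probability is bounded away from both $0$ and $1$ so the favorable case has constant probability; (b) the probability that any $A$-block contains a surviving string is at most $\delta$ so that the worst-case cost of the complementary case contributes only an $o(1)$ correction to $\E[\cost(\alg)]$; (c) the length ordering $\ell' = 3\ell$ really does make $s\to a\to b\to t$ the unique initial shortest candidate under every consistent tie-breaking rule; and (d) the expected cost of invalidating one $A$-block strictly dominates the expected cost of the two-$B$-block prefix of $\alg$. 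To keep the $\Lazy_f$ lower bound uniform over all edge selectors and tie-breaking rules, I will argue structurally rather than by case analysis on $f$: the symmetry among the three $A$-blocks together with the independence of $A$-realizations and $B$-realizations suffices to pin down the wasted-work probability, without ever needing to inspect the internal logic of any particular selector.
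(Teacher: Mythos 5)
Your proposal follows the paper's proof essentially verbatim in structure: the same five-arc gadget graph, the same two-phase algorithm $\alg$ that first resolves both $B$ structures, the same case split on whether exactly one of $sb$, $at$ survives, and the same ``$1$ versus $1.5$ $A$-structures'' accounting. Two concrete slips are worth fixing, though. First, you describe structure $B$ as ``$\kappa'=2$ parallel strings of length $\ell'=3\ell$.'' That is not the gadget in Figure~4, which is a \emph{series} chain of $\ell'$ stages, each stage consisting of $\kappa'=2$ parallel edges; its survival probability is $(1-\epsilon^{\kappa'})^{\ell'}\approx 0.616$. With your stated topology the survival probability of a $B$ block would be $1-(1-(1-\epsilon)^{\ell'})^2\approx (\delta/\kappa)^{3}\cdot 2$, i.e.\ astronomically small, and your requirement (a) --- that a $B$ block survives with constant probability while having length $3\ell$ --- becomes unsatisfiable. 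So the ``main obstacle'' you defer to calibration is actually unsolvable under your reading of the gadget; the series-of-parallel-pairs structure is what makes the calibration possible. Second, your claim that the first $A$ block invalidated by $\Lazy_f$ is the ``wrong'' one with probability at least $2/3$ is false as a uniform bound over selectors: a selector that concentrates all its queries on $sa$ (or on $bt$) is wrong with probability exactly $1/2$, since $ab$ is the only block that is always wrong and a selector need never touch it first. The correct uniform bound is $1/2$, which is what the paper uses ($1.5$ $A$ structures in expectation) and which still suffices for the strict inequality, so this is a wrong constant rather than a broken argument.

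One smaller omission: for the $\Lazy_f$ side you need a \emph{lower} bound on the expected cost of fully invalidating an $A$ structure, \emph{conditioned} on the event that it has no surviving string. You assert this ``by construction,'' but the conditioning changes the per-string query distribution; the paper handles this with a positive-correlation argument (Claim~1) showing the conditional expectation is still essentially $\kappa/\epsilon$. You should include that step rather than treat it as automatic.
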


\begin{proof}
Consider the graph in Figure~\ref{fig:full}. Let 
\begin{align*}
&\kappa = 10^4,\\ 
&\kappa' = 2,\\ 
&\epsilon = 10^{-2},\\ 
&\delta = 10^{-3},\\  
&\ell = \frac{1}{\epsilon} \ln\left(\frac \kappa \delta\right),\\  
&\ell' = 3 \ell
\end{align*}
for the structures in Figures~\ref{fig:thick} and \ref{fig:string}. Let $p = 1 - \epsilon$ be the probability of existence of any one edge in these structures.  

In the following claim,
we show that the structure in Figure~\ref{fig:thick} almost certainly does not have a path, but one has to query many edges to verify that this is indeed the case.

\begin{claim}
\label{cl:A}
With probability at least $1-\delta$, there is no $c \dash d$ path in the structure shown in Figure~\ref{fig:thick}. Conditioned on the event that no $c\dash d$ path exists, any correct querying policy has to query at least $10^6 - 2$ edges in expectation to certify that no path exists. 
\end{claim}
\begin{claimproof}
The probability of a path existing in this structure is at most
\[
\kappa (1- \epsilon)^\ell \leq \kappa e^{-\epsilon \ell} = \delta.
\]
Let $\mathcal{E}$ be the event that no path exists in the structure. Given~$\mathcal{E}$, each of the $\kappa$ strings of length $\ell$ have to be invalidated. Consider the expected number of queries needed to invalidate a single string, conditioned on $\mathcal{E}$. For $i=1,\ldots,\ell$, let $\mathcal{F}_i$ be the event that the first $i-1$ queried edges in the string exist in $G'$, and the $i^{th}$ edge does not. Clearly the events $\mathcal{F}_i$ and $\mathcal{E}$ are positively correlated, that is, for all $i=1,\ldots,\ell$,
$\Pr[\mathcal{F}_i\ |\ \mathcal{E}]\geq \Pr[\mathcal{F}_i]$.
Therefore, conditioned on $\mathcal{E}$, the expected number of queries on a string is
\begin{align*}
\sum_{i=1}^\ell \Pr[\mathcal{F}_i|\mathcal{E}]\cdot i & \geq \sum_{i=1}^\ell \Pr[\mathcal{F}_i]\cdot i\\
&=\sum_{i=1}^{\ell} (1-\epsilon)^{i-1} \epsilon i\\
  &= \frac{1 - (1-\epsilon)^{\ell} - \ell (1-\epsilon)^{\ell} \epsilon}{\epsilon}\\
& \geq \frac 1\epsilon - 2 \cdot 10^{-4}.
\end{align*}
Using the linearity of expectation and summing over all~$\kappa$ disjoint strings that have to be invalidated, the expected number of queries needed to invalidate the structure is at least
$$\kappa \left( \frac 1\epsilon - 2 \cdot 10^{-4} \right) = 10^6 - 2.$$
\end{claimproof}

\vspace{2mm}

In the next claim, we show that the structure in Figure~\ref{fig:string}, though narrower and longer than the structure in Figure~\ref{fig:thick}, has a path with higher probability. 

\begin{claim}
\label{cl:B}
With probability $0.616 \pm 10^{-3}$ there is a path in the structure shown in Figure~\ref{fig:string}. Moreover, the expected number of queries needed to find a path or certify that none exists is at most $10^{4}$.
\end{claim}

\begin{claimproof}
The probability of a path existing in this structure is exactly 
\[
(1- \epsilon^{\kappa'})^{\ell'} = (1 - 0.01^2)^{\frac{3}{0.01} \ln(10^7)} = 0.616 \pm 10^{-3}.
\]
Moreover, since the structure has $\kappa'\ell'$ edges overall, the expected number of queries is also bounded by $\kappa'\ell' \leq 10^{4}$.
\end{claimproof}

\vspace{1.5mm}

We now turn to comparing the performance of \alg with that of \Lazy. First, note that \alg queries at most $2 \cdot 10^4$ edges for verifying arcs $sb$ and $at$ at the beginning, whereas \Lazy may not query those edges. Consider the following cases:
\begin{enumerate}
	\item There is a path in at least one of the $A$ structures $sa$, $ab$, or $bt$.
	\item There is no path in the $A$ structures $sa$, $ab$ and $bt$, and exactly one of the $B$ structures $sb$ or $at$ has a path.
	\item Cases 1 and 2 do not hold.
\end{enumerate}

Consider Case 1. By Claim~\ref{cl:A}, this is a rare event that happens with probability at most $3\delta$. Conditioned on this event, \alg verifies at most three $A$ structures in addition to arcs~$sb$ and $at$, with overall number of edges $3\kappa \ell$. Taking the probability of this event into account, \alg issues at most $$3\delta \cdot 3 \kappa \ell \leq 1.46 \cdot 10^5$$ more queries in expectation (in addition to the $B$ structures which we will account for separately).

Consider Case 2. By Claims~\ref{cl:A} and \ref{cl:B}, this event happens with probability at least $$2(1- 3\delta) \cdot(0.616 \pm10^3)\cdot (1 - 0.616\mp10^3) \geq 0.471.$$ 
Conditioned on this event, \alg invalidates one $A$ structure in addition to verifying arcs $sb$ and $at$. This is because $\alg$ only needs to query and invalidate the $A$ structure that is parallel to the non-valid $B$ structure on $sb$ or $at$. For example, when arc $at$ has a path and $sb$ does not, it suffices to invalidate structure~$sa$ to certify that no $\st$ path exists in Figure~\ref{fig:full}.

On the other hand, conditioned on the event that exactly one of the structures $sb$ or $at$ has a path, \Lazy has to invalidate $1.5$ $A$ structures in expectation. Indeed, initially it must query edges on the shortest path, and they are all in~$A$ structures. It can only query $sb$ or $at$ after an $A$ structure has been invalidated, but, at that point, with probability $1/2$ there might still be a path using another $A$ structure, chained with the valid $B$ structure. 

In Case 3, \alg verifies at most $2$ more $B$ structures than \Lazy (this is Case $1$ of \alg).

To summarize,
\vspace{-1.0mm}
\begin{align*}
&\E_{G'\sim G}[\cost (\alg(G, \O_{G'}))]\\
&\quad\quad \leq 
\E_{G'\sim G}\left[ \cost(\Lazy(G, \O_{G'})) \right]  \\
&\quad\quad \quad\quad + \underbrace{2 \cdot 10^4}_{\text{verifying B structures}}
+\underbrace{1.46 \cdot 10^5}_{\text{Case 1}} - \underbrace{2.35 \cdot 10^5}_{\text{Case 2}} \\
&\quad\quad < 
\E_{G'\sim G}\left[ \cost(\Lazy(G, \O_{G'})) \right]. 
\end{align*}
%
\end{proof}

It may be instructive to understand why the Example of Figure~\ref{fig:full} does not contradict Theorem~\ref{thm:det}. Consider the potentially problematic Case 2 of the proof of Theorem~\ref{thm:rand}, where, say, the arc $sb$ is collision-free, and the arc $at$ is not; moreover, the three $A$ structures are in collision. Then $Q^*=Q^*_n$ (as defined in the proof of Theorem~\ref{thm:det}) would be a set of edges that invalidates $at$ and $bt$. In the deterministic setting, \Lazy with an omniscient edge selector could start by invalidating $bt$, then proceed to $at$.

\subsection{Query Complexity Bounds}

Theorem~\ref{thm:rand} implies that algorithms in $\Lazies$ may be suboptimal in the probabilistic setting. Nevertheless, it may still be possible to give satisfying worst-case guarantees with respect to the performance of algorithms in this class. This is exactly what we do next.

Specifically, we first show that any algorithm in  $\Lazies$ (with an edge selector satisfying a certain property) uses $O(n/p)$ queries, where $n=|V|$, with high probability. We then show that there is a graph where no correct path-finding algorithm terminates within $\omega(n/p)$ queries. Taken together, these results show that no other algorithm can hope to do significantly better than algorithms in $\Lazies$ over all underlying graphs.


In our upper bound, we focus on edge selectors that choose an unqueried edge between two connected components formed by the validated queried edges.

\begin{definition}
An edge selector $f: \P \times 2^E \times 2^E$ is \emph{connective} if for any $P\in\P$ and edge sets $Q_y$ and $Q_n$, $f(P,Q_y, Q_n)$ returns an edge $e\in P\setminus (Q_y \cup Q_n)$ that connects two connected components of the subgraph $(V, Q_y)$.  
\end{definition}

It is not hard to see that the bisection edge selector (defined in Section~\ref{sec:model}) is not a connective edge selector. On the other hand, both forward and backward edge selectors are connective. 

Let us provide an overview of why the forward edge selector --- used with a \Lazy algorithm that breaks ties in favor of paths with more verified edges --- is connective (the same argument applies to the backward edge selector, switching the roles of $s$ and $t$). Note that at any time the set of verified edges forms a connected component around vertex $s$. Moreover, by the same reasoning behind Dijkstra~\cite{D59}, if a vertex $v$ is in that connected component, the shortest $s$-$v$ path in $G'$ has been found. Now, refer to Figure~\ref{fig:forward}, and consider the path $P_1,v,v',R$, for two vertices $v$ and $v'$ that are already reachable from $s$ (i.e., $P_1\subseteq Q_y$ and $P_2\subseteq Q_y$), and $R\subseteq E\setminus Q$. Then \Lazy would prefer the path $P_2,R$, because $|P_2|\leq |P_1|+1$ (as it is the shortest path to $v'$), and $P_2$ is fully verified. We conclude that \Lazy with the forward edge selector never queries an edge within a connected component.


\begin{figure}[t]
\begin{center}
\begin{tikzpicture}[scale = 1.2]
	\usetikzlibrary{decorations.pathreplacing,decorations.pathmorphing, snakes }
    \tikzstyle{every node}=[draw,circle,fill=white,minimum size=12pt,
                            inner sep=0pt] 
    \draw (0,0) node (S) {{\small $s$}};
    \draw (2cm, 1.5cm) node (VV) {{\small $v$}};
    \draw (1.5cm,0cm) node (V) {{\small $v'$}};
    \draw (4cm,0cm) node (T) {{\small $t$}};

    \draw[decorate,decoration=snake,thick](S) to [out=80,in=190] (VV){};
	\draw[decorate,decoration=snake] (V) to [out=0,in=180] (T){};
 	\draw[decorate,decoration=snake,thick] (S) to (V){};
    \draw [dashed] (V) to (VV){};

  \draw (0.7cm, 1.5cm) node[draw = none, fill = none]{$P_1$};
  \draw (0.7cm, -0.25cm) node[draw = none, fill = none]{$P_2$};
  \draw (2.8cm, -0.25cm) node[draw = none, fill = none]{$R$};

\end{tikzpicture}
\end{center}
\caption{\Lazy with the forward edge selector does not query an edge between two vertices in the same connected component.} \label{fig:forward}
\end{figure}
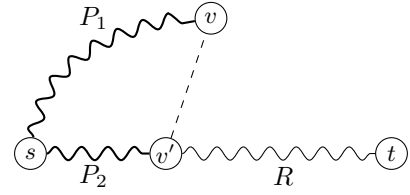

We now turn to deriving a rigorous upper bound on the number of edges queried by any \Lazy algorithm with a connective edge selector. Although the proof seems straightforward in retrospect, it is actually rather tricky. In terms of implications, we view this theorem as our main result. 

\begin{theorem}
\label{thm:upper}
For any $\delta>0$, $p\in(0,1)$, graph $G$ with $n$ vertices, and a connective edge selector $f$, with probability at least $1-\delta$, 
\[ \cost(\Lazy_f(G, \O_{G'})) \in O\left(\frac{n + \ln(1/\delta)}{p} \right).
\]
\end{theorem}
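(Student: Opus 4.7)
Proof plan. The plan is to reduce the query-complexity question to a simple Chernoff bound on a sum of i.i.d.\ Bernoullis. The driver of the argument is the connective property of $f$: every \emph{Yes} answer strictly decreases the number of connected components of the subgraph $(V, Q_y)$. Since the algorithm starts with $n$ isolated components (an empty $Q_y$) and the number of components can only go down, at most $n-1$ queries can possibly return \emph{Yes} during an entire execution of $\Lazy_f$.

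Next, I would couple the oracle's answers with an i.i.d.\ sequence $Y_1, Y_2, \ldots$ of Bernoulli($p$) random variables, where $Y_i$ is the response to the $i$-th query. This coupling is legitimate because (i) the edge statuses in $G' \sim_p G$ are mutually independent Bernoulli($p$) variables, and (ii) $\Lazy_f$ never queries an edge twice, so no matter how adaptively the algorithm chooses the $i$-th edge to query, its status is an unexamined Bernoulli($p$) coin independent of $Y_1, \ldots, Y_{i-1}$. Let $N = \cost(\Lazy_f(G, \O_{G'}))$ be the total number of queries. The deterministic fact from the previous paragraph gives
\[
\sum_{i=1}^{N} Y_i \;\le\; n - 1.
\]

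Now choose $T = \lceil C (n + \ln(1/\delta))/p \rceil$ for a sufficiently large absolute constant $C$ (say $C = 16$), so that both $Tp \ge 2n$ and $Tp/8 \ge \ln(1/\delta)$ hold. If the event $\{N > T\}$ occurs, then in particular the algorithm has issued at least $T$ queries with at most $n-1$ successes, so $\sum_{i=1}^{T} Y_i \le n - 1 \le Tp/2$. Therefore
\[
\Pr[N > T] \;\le\; \Pr\!\left[\sum_{i=1}^{T} Y_i \le Tp/2 \right] \;\le\; \exp(-Tp/8) \;\le\; \delta,
\]
where the middle inequality is the standard multiplicative Chernoff bound, and the last uses $Tp/8 \ge \ln(1/\delta)$. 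This gives the claimed bound $N \in O((n + \ln(1/\delta))/p)$ with probability at least $1 - \delta$.

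The only subtle step is justifying the coupling / independence claim used to reduce $N$ to a binomial tail bound; the hidden issue is that $\Lazy_f$ adaptively chooses which edge to query based on past oracle responses, so one has to articulate cleanly that the status of the freshly selected edge is still an unexamined, independent Bernoulli($p$) coin. Once this is explicit (using that no edge is queried twice and that edge statuses in $G'$ are independent), the remainder of the argument is the routine Chernoff calculation sketched above.
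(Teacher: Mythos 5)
Your proposal is correct and follows essentially the same route as the paper's proof: couple the adaptive oracle answers with a pre-flipped i.i.d.\ Bernoulli($p$) sequence (principle of deferred decisions), observe that the connective property forces each \textsc{Yes} answer to merge two components of $(V,Q_y)$ so at most $n-1$ queries can succeed, and apply the multiplicative Chernoff lower-tail bound to conclude that $\Omega(n/p)$ queries would almost surely produce too many successes. The only cosmetic difference is that the paper conditions on the event $\sum_i X_i \ge n$ and argues its conditional failure probability is zero, whereas you phrase the same fact as the deterministic inequality $\sum_{i\le N} Y_i \le n-1$.
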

\begin{proof}
Let  
$$m = \frac{1}{p} \max\left\{ 2n,  8 \ln\left(\frac{1}{\delta}\right) \right\},$$ 
and let $\calE$ be the event that $\cost(\Lazy_f(G, \O_{G'})) > m$. 

Consider  $m$ independent  Bernoulli random variables each with parameter $p$, $\vec X=(X_1, \dots, X_m)$. Let $X_i = 1$ correspond to the event where the $i^{th}$ edge queried by $\Lazy_f$ is in $E'$, and $X_i = 0$ otherwise. Intuitively, we think of $X_1,\ldots,X_m$ as flipping coins with bias $p$ \emph{in advance} to decide the answers to the queries issued by \Lazy. We can do this because the probability that the answer to a query is \textsc{Yes} is independent of which edge is queried.  

Formally, let $\Pr_{(\vec X, G')}[\cdot]$ correspond to taking probability over a random process that generates $G'$ by first instantiating the Bernoulli random variables $X_1,\ldots,X_m$, then determining the corresponding sets of edges $Q_y$ and $Q_n$ that are validated and invalidated by \Lazy, respectively.
For any edge  $e\in Q_y$ or $e\in Q_n$, set $e$ to belong to, or not belong to $E'$, respectively. For any edge $e\in E\setminus (Q_y \cup Q_n)$, set $e$ to  belong to $E'$ with probability $p$, independently.
Note that in this process each edge belongs to $E'$ with probability exactly $p$. So,  $\Pr_{G'\sim G}[\calE] = \Pr_{(\vec X, G')}[\calE].$ 
Using conditional probability, we have
\begin{align}
\Pr_{G'}\left[\calE \right]&{=}\!\! \Pr_{( \vec X, G')}\!\left[ \calE ~\left|~ \sum_{i=1}^m{X_i}<n\right.\right]\!  \Pr_{( \vec X, G')}\!\!\left[ \sum_{i=1}^m X_i < n \right] \label{eq:line1}\\
&\!\!{+}\Pr_{( \vec X, G')}\! \left[ \calE ~\left|~ \sum_{i=1}^m X_i \geq n \right.\right]\!  \Pr_{( \vec X, G')}\!\!\left[ \sum_{i=1}^m X_i \geq n \right] \label{eq:line2}
\end{align}

In the following, we analyze  terms \eqref{eq:line1} and \eqref{eq:line2}, separately. For the first term, we have
\begin{align*}
(\ref{eq:line1}) ~ & \leq  \Pr_{( \vec X, G')}\left[ \sum_{i=1}^m X_i < n \right]  =  \Pr_{\vec X}\left[ \sum_{i=1}^m X_i < n \right] \leq \delta,
\end{align*}
where the last inequality is a direct consequence of the Chernoff bound:
\begin{align*}
\Pr_{\vec X}\left[ \sum_{i=1}^m X_i < n \right] &\leq \Pr_{\vec X}\left[ \sum_{i=1}^m X_i  < \frac{mp}{2} \right] \leq \exp\left(-\frac{mp}{8}\right). 
\end{align*}

Next, we argue that the second term, in Equation \eqref{eq:line2}, is zero. Specifically, we show that 
\begin{equation}
\label{eq:zero}
\Pr_{( \vec X, G')}\! \left[ \calE~ \left|~ \sum_{i=1}^m X_i \geq n \right.\right] = 0.
\end{equation}
Indeed recall that $Q_y$ denotes the set of edges validated by $\Lazy_f$ during the first $m$ queries, corresponding to $X_1, \dots, X_m$. Note that if $\Lazy_f$ does not terminate within the first $m$ queries, then $s$ and $t$ belong to two different connected components of the graph $H = (V, Q_y)$. By the connectivity property of $f$, at any time $\Lazy_f$ only queries an edge that is between two connected components of the validated edges at that time. So, every time $\Lazy_f$ encounters a queried edge that is realized (that is,~$Q_y$ grows), the number of connected components in $H$ decreases. Therefore, after encountering~$n$ verified edges, i.e., $\sum X_i \geq n$, there is an $\st$ path in $H$. This establishes Equation~\eqref{eq:zero}.

Combining terms \eqref{eq:line1} and \eqref{eq:line2}, we have that $\Pr_{G'}\left[ \calE \right] \leq \delta.$
\end{proof}

In the next theorem, we provide a matching lower bound for the number of queries that \emph{any correct path finding algorithm} requires. 

\begin{theorem}
\label{thm:lower}
For all $p\in(0,1)$ and $n>15$, there exists a graph $G$ with $n$ vertices such that for any correct path-finding algorithm $\alg$,
\[ \Pr_{G'} \left[   \cost(\alg(G, \O_{G'})) \leq \frac{n-1}{2p} \right] \leq 0.1.
\]
\end{theorem}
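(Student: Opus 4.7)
My plan is to take $G$ to be a ``thick chain'' multigraph and show that in fewer than $(n-1)/(2p)$ queries, any correct algorithm can produce a correct answer only with small probability. Concretely, I would let $V = \{v_0, v_1, \ldots, v_{n-1}\}$ with $v_0 = s$ and $v_{n-1} = t$, and place $k := \lceil (n-1)/(2p) \rceil + 1$ parallel multi-edges between each consecutive pair $v_{i-1}, v_i$, thereby partitioning $E$ into $n-1$ ``layers'' of $k$ parallel edges each.

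Two structural observations then drive the argument. First, every $s$-$t$ path in $G$ is simple and uses exactly one edge from each of the $n-1$ layers, so all $s$-$t$ paths in $G$ have the same length; hence any algorithm that correctly outputs a shortest $s$-$t$ path in $G'$ must have validated at least $n-1$ alive edges (one per layer). Second, correctly certifying that no $s$-$t$ path exists in $G'$ requires witnessing an \emph{entire} dead layer, i.e.\ querying all $k$ edges of some layer and finding them all dead, which requires $\geq k$ queries in total. Setting $m := \lfloor (n-1)/(2p) \rfloor$, we have $k > m$, so if $\cost(\alg(G,\O_{G'})) \leq m$ the algorithm cannot have certified ``no path,'' hence must have terminated with a valid path and therefore at least $n-1$ of its $\leq m$ queries returned alive.

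The outcomes of distinct edges of $G$ are mutually independent Bernoulli$(p)$, so the number of alive outcomes among the first $m$ queries of any (possibly adaptive) algorithm is distributed as $\mathrm{Binom}(m, p)$. Combining with the previous paragraph gives
\[
\Pr_{G'}\bigl[\cost(\alg(G,\O_{G'})) \leq m\bigr] \;\leq\; \Pr\bigl[\mathrm{Binom}(m, p) \geq n-1\bigr].
\]
Since $mp \leq (n-1)/2$, the right-hand side is the probability that a binomial reaches at least twice its mean; a standard multiplicative Chernoff bound (applied with $\delta \geq 1$) bounds it by $\exp(-(n-1)/6)$. For $n > 15$ this is at most $\exp(-5/2) < 0.1$, which is the desired conclusion.

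The main conceptual obstacle is the adaptivity of $\alg$: one must argue that the algorithm cannot, by clever choice of query order or by stopping early, bias the fraction of alive outcomes among its first $m$ queries. The crux is that the random labels on distinct edges of $G$ are mutually independent, so conditioning on earlier outcomes carries no information about a freshly queried edge, and hence the alive count among any $m$ distinct queries is genuinely $\mathrm{Binom}(m,p)$ regardless of the adaptive rule (early stopping only reduces the number of alive outcomes realized, strengthening the inequality). Once this independence is properly articulated, the rest of the proof is an elementary Chernoff estimate.
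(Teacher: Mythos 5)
Your proposal is correct and follows essentially the same route as the paper: the same chain-of-parallel-edges multigraph (with more than $\lfloor (n-1)/(2p)\rfloor$ edges per layer), the same dichotomy that within $m$ queries the algorithm cannot certify non-existence and so must have seen $n-1$ alive edges, and the same reduction to a binomial tail via pre-flipped independent coins. If anything, your Chernoff step (bounding $\Pr[\mathrm{Binom}(m,p)\geq n-1]$ directly by $\exp(-(n-1)/6)$ rather than passing through $\Pr[\mathrm{Binom}(m,p)\geq 2mp]\leq \exp(-mp/3)$) is slightly tighter and handles the regime where $mp$ falls just below $(n-1)/2$ more cleanly than the paper's final estimate.
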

\begin{proof}
Let $m = \left\lfloor (n-1)/2p\right\rfloor$.
Consider the following graph $G = (V, E)$: Let $V$ be the sequence of vertices $s= v_1, v_2, \dots, v_n = t$. For each $i=1,\ldots,n-1$, let there be $m+1$ parallel edges between $v_i$ and $v_{i+1}$. 

Since there are $m+1$ parallel edges between any two vertices, $\alg$ cannot certify that no $\st$ path exists in $G'$ with only $m$ queries. So, if $\alg$ terminates with at most $m$ queries, it is because it has found an $\st$ path. To find an $\st$ path, $\alg$ must have encountered at least $n-1$ realized edges between the $m$ queries it has made. Therefore, using the same Bernoulli random variables as in the proof of Theorem~\ref{thm:upper}, the Chernoff bound, and the fact that $n>15$, we have
\small
\begin{align*}
 \Pr_{G'} \left[   \cost(\alg(G, \O_{G'})) \leq m \right] & \leq  \Pr_{X_1, \dots, X_m} \left[  \sum_{i=1}^m {X_i} \geq n-1 \right] \\
  &\leq  \Pr_{X_1, \dots, X_m} \left[  \sum_{i=1}^m {X_i} \geq 2m p \right] \\
  & \leq \exp( - mp/3) \leq 0.1.
\end{align*}
\normalsize
\end{proof}

\section{Discussion}
\label{sec:disc}

We wrap up by briefly discussing some pertinent issues. 

\medskip
\noindent\textbf{Multigraphs are mostly for ease of exposition.} Recall that the graph $G$ can have multiple edges between two vertices. As we mentioned in Section~\ref{sec:model}, this assumption is ``mostly'' for ease of exposition. Clearly, our positive results, Theorems~\ref{thm:det} and \ref{thm:upper}, hold even for simple graphs. Our first negative result, Theorem~\ref{thm:rand}, holds for simple graphs but the construction becomes (even) more unwieldy. The only exception is Theorem~\ref{thm:lower}: we were unable to establish it for simple graphs (it is easy to prove a lower bound of $\Omega(n/\ln n)$ for constant~$p$, though). 

\medskip
\noindent\textbf{Is the connectivity assumption needed?} 
Theorem~\ref{thm:upper} holds for \Lazy with a connective edge selector. Even though the current proof strongly relies on the connectivity assumption, we have not found examples of edge selectors that violate the theorem's conclusion (in particular, we have not been able to find a bad example for the bisection edge selector). We therefore conjecture that the $O(n/p)$ query complexity upper bound holds for \emph{any} edge selector, as long as \Lazy breaks ties in favor of paths with more verified edges (otherwise it is easy to construct bad examples). Despite significant effort on our part, this conjecture remains open. 

\medskip
\noindent\textbf{Computation of the optimal policy in the probabilistic setting.} 
Our results suggest that \Lazy is an excellent proxy for the optimal policy in the probabilistic setting, in that with, say, the forward edge selector, it is computationally efficient and provides satisfying guarantees. This is backed up by the empirical evaluation presented by Dellin and Srinivasa~\cite{DS16}.
One may ask, though, whether the optimal policy itself can be computed. The answer is that this seems to be an extremely hard problem. The most direct representation of the problem is via a Markov decision process (MDP), where there is a state for every possible choice of $Q_y$ and $Q_n$, the action space is edges in $E\setminus Q$, and the transitions and rewards are defined in the obvious way. Although an optimal policy in an MDP can be computed in polynomial time in its representation, the difficulty is that the size of the state space is exponential in $|E|$. That said, heuristics for (exactly or approximately) computing the optimal policy in the probabilistic setting have the potential to provide a practical alternative to \Lazy.

\bibliographystyle{aaai}
\bibliography{bibliography}

\end{document}